\definecolor{DarkGreen}{RGB}{1,100,32} 
\theoremstyle{plain}
\newtheorem{theorem}{Theorem}[section]
\newtheorem{lemma}[theorem]{Lemma}
\theoremstyle{definition}
\theoremstyle{remark}
\title{Rethinking Graph Out-Of-Distribution Generalization: A Learnable Random Walk Perspective}
\author{
  Henan Sun$^\dagger$  Xunkai Li$^\dagger$ Lei Zhu$^\#$ Junyi Han$^\ddagger$ Guang Zeng$^\#$ Rong-Hua Li$^\dagger$ Guoren Wang$^\dagger$\\
  $^\dagger$ Beijing Institute of Technology\\
  $^\ddagger$ Jilin University\\
  $^\#$ Ant Group\\
  \texttt{magneto0617@foxmail.com}, \quad
  \texttt{cs.xunkai.li@gmail.com}, \quad
  \texttt{simon.zl@antgroup.com}, \\
  \texttt{hanjy5521@mails.jlu.edu.cn}, \quad
  \texttt{senhua.zg@antfin.com}, \quad
  \texttt{lironghuabit@126.com}, \\
  \texttt{wanggrbit@gmail.com} 
}
\begin{document}

\maketitle

\begin{abstract}
Out-Of-Distribution (OOD) generalization has gained increasing attentions for machine learning on graphs, as graph neural networks (GNNs) often exhibit performance degradation under distribution shifts. Existing graph OOD methods tend to follow the basic ideas of invariant risk minimization and structural causal models, interpreting the invariant knowledge across datasets under various distribution shifts as graph topology or graph spectrum. However, these interpretations may be inconsistent with real-world scenarios, as neither invariant topology nor spectrum is assured. In this paper, we advocate the learnable random walk (LRW) perspective as the instantiation of invariant knowledge, and propose LRW-OOD to realize graph OOD generalization learning. Instead of employing fixed probability transition matrix (i.e., degree-normalized adjacency matrix), we parameterize the transition matrix with an LRW-sampler and a path encoder. Furthermore, we propose the kernel density estimation (KDE)-based mutual information (MI) loss to generate random walk sequences that adhere to OOD principles.  Extensive experiment demonstrates that our model can effectively enhance graph OOD generalization under various types of distribution shifts and yield a significant accuracy improvement of 3.87\% over state-of-the-art graph OOD generalization baselines.

\end{abstract}

\section{Introduction}
\label{sec: Introduction}
Graph neural networks (GNNs) have become a fundamental solution of encoder architectures for modeling graph-structured data~\cite{wu2020gnn_survey1, zhou2022gnn_survey2, bessadok2022gnn_survey3, song2022gnn_survey4}. They facilitate the efficient computation of node representations, which can be readily adapted to a wide range of graph-based applications, including social network analysis, recommendation systems, anomaly detection and so on~\cite{zhao2021ugrec_directed_app_recommendation1, virinchi2022_directed_app_recommendation2, tang2022app_detection1, chen2022app_detection2}.
Despite great advances of GNNs, most of existing models follow the i.i.d. assumption, i.e., the testing nodes independently generated from an identical distribution as the training ones~\cite{kipf2016gcn, velivckovic2017gat, hamilton2017graphsage, pei2020geomgcn, sun2023adpa, li2024_atp}. However, this assumption doesn’t necessarily conform to the real-world scenarios since numerous spurious correlations among datasets may infect GNNs' training. Recent evidence has demonstrated that GNNs perform unsatisfactorily on Out-Of-Distribution (OOD) data where the distributions of test data exhibit a major shift compared to the training data~\cite{arjovsky2019invariant, koyama2020out, chen2022learning}. Thus, such problem, also known as graph OOD generalization, remains a great challenge to be solved. 

Existing graph OOD generalization models for node-level tasks are largely inspired by the concepts from the invariant risk minimization (IRM) and structural causal models (SCMs)~\cite{arjovsky2019invariant, koyama2020out, chen2022learning}. These models employ various mechanisms to extract invariant knowledge shared between the training and testing datasets and discard the spurious correlation among them. Broadly, graph OOD generalization models can be categorized into two primary approaches: capturing invariant graph topology and capturing invariant graph spectrum, as shown in Figure~\ref{fig: existing_pipelines}(b)~\cite{wu2022handling, guo2024investigating, xia2023learning, zhu2021shift, liu2022revisiting, wu2024graph, zhu2024mario}. The first approach interprets invariant knowledge as specific graph topology and leverages techniques such as pseudo-environment-generation to facilitate graph OOD generalization learning. In contrast, models that focus on invariant graph spectrum learning emphasize the acquisition of a stable spectral representation across graphs under multiple environments. These models typically assume that certain spectral components, particularly the low-frequency spectrum, remain invariant. Thus, they introduce perturbations to the remaining spectral components, thereby generating diverse graph data for graph OOD generalization learning.
\begin{figure*}[t]
  \includegraphics[width=\textwidth]{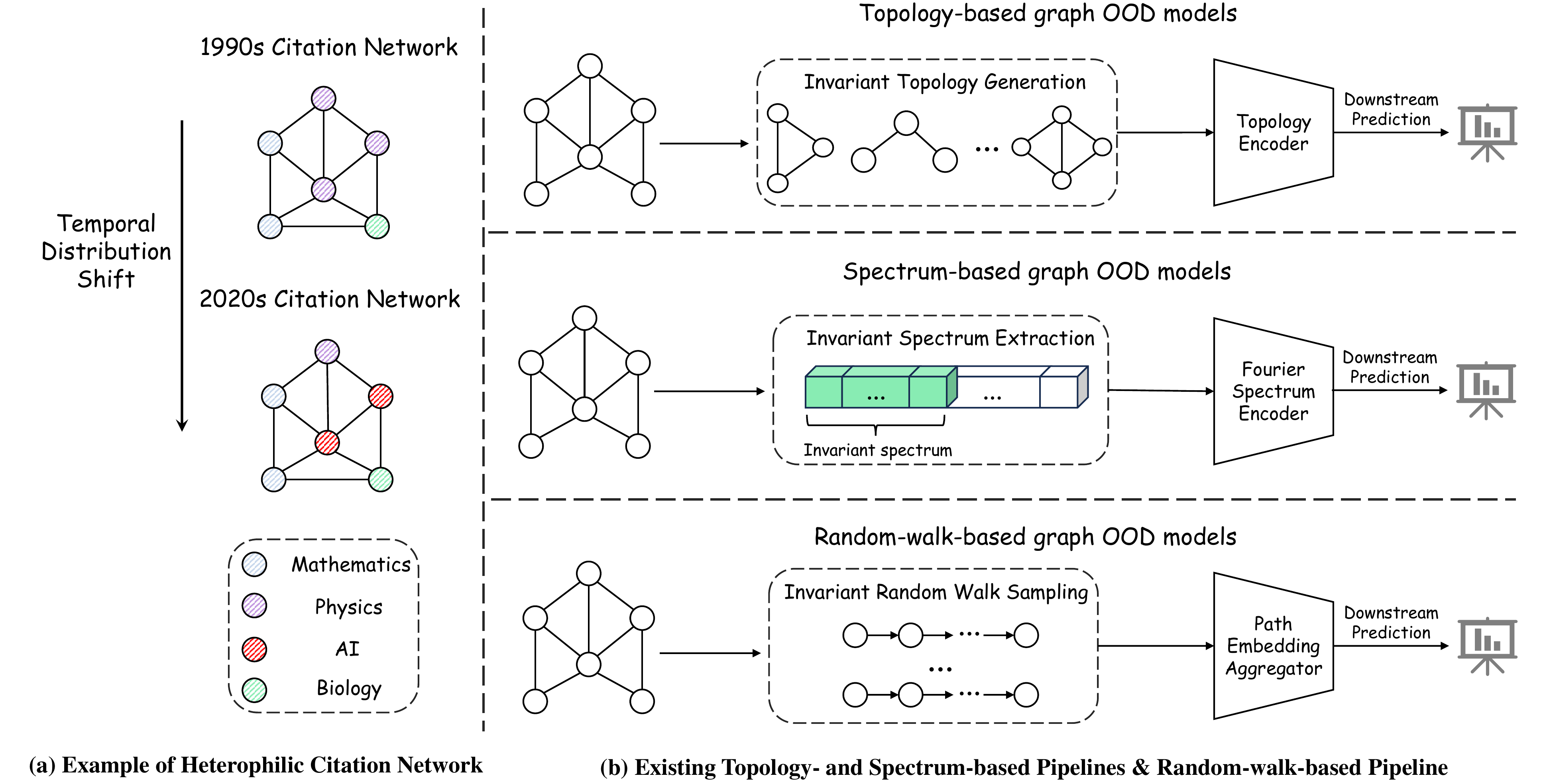}
  \centering
  \captionsetup{font={small,stretch=1}}
  \caption{An example of the heterophilic citation network under temporal distribution shift and pipelines of existing graph OOD models. Topology-based and spectrum-based pipelines are the two primary approaches for graph OOD generalization, while the random-walk-based pipeline is the proposed one in this paper.}
  \label{fig: existing_pipelines}
\end{figure*}
Despite the advancements in the aforementioned methods, they still fail to achieve satisfactory performance in graph OOD generalization learning. Take the heterophilic citation networks of Figure~\ref{fig: existing_pipelines}(a) as an example, where nodes represent papers and edges denote citation relationships. The objective in such networks is to predict the category of a given paper (e.g., mathematics, physics, AI and biology), and the latent distribution shifts arise due to temporal variations in citation patterns. For instance, in the 1990s, mathematics-related papers were predominantly cited by works in physics and biology, whereas in the 2020s, mathematics-related papers within a similar local topology may exhibit a stronger tendency to be cited by AI-related research. In this scenario, previous graph OOD generalization models exhibit severe limitations as follow: 

\textit{Limitation 1}: The presence of an invariant graph topology across graphs is not assured, as structurally similar topologies may correspond to distinct semantic interpretations under varying distribution shifts. For instance, in the citation network depicted in Figure~\ref{fig: existing_pipelines}(a), the semantic significance of the local topology formed by nodes around physic papers differs from that of nodes around AI papers, despite the fact that both subgraphs exhibit analogous topological structures.

\textit{Limitation 2}: The extraction of a universally invariant graph spectrum remains unreliable due to the lack of a well-defined theoretical relationship between the graph spectrum and the formulation of OOD generalization in graphs. For example, in the citation network depicted in Figure~\ref{fig: existing_pipelines}(a), the graph from the 1990s exhibits stronger homophily, resulting in higher magnitudes in the low-frequency spectrum components and lower magnitudes in the high-frequency components. In contrast, the graph from the 2020s demonstrates increased heterophily, leading to a complete reversal in the distributions across frequency bands. Consequently, no invariant graph spectrum persists in such graphs under temporal distribution shifts.

To address the aforementioned limitations, we utilize learnable random walk sequences as a means of capturing invariant knowledge across graphs under various distribution shifts for node-level tasks. The underlying motivations are derived from two perspectives: (\textbf{M1}) Different from the two approaches illustrated above, learnable random walk sequences are capable of integrating graph topology and node features together into the probability matrix, thus concretizing invariant knowledge into the probability of the next random walk. Intuitively, such learnable-probability-based invariant random walk exists as long as there are invariant knowledge shared across graphs under distribution shifts. (\textbf{M2}) Through rigorous mathematical analysis, we demonstrate that learnable random walk sequences exhibit a well-defined theoretical connection to the formulation of graph OOD generalization outlined in Section \ref{sec: Methods}. This approach effectively captures invariant knowledge while mitigating spurious correlations, thereby facilitating robust graph OOD generalization learning. Based on the illustration above, we propose the \textbf{L}earnable \textbf{R}andom \textbf{W}alk for graph \textbf{OOD} generalization model (LRW-OOD). The detailed contributions of this paper are summarized below:

\textbf{Our contributions.}
(1) \textit{\underline{New Perspective.}} 
    To the best of our knowledge, this study is the first to systematically examine the impact of learnable random walk sequences on the graph OOD generalization problems. Our findings offer valuable insights into graph OOD learning, contributing to a deeper understanding of how random walk sequences can enhance model's performance on datasets under diverse distribution shifts.
    (2) \textit{\underline{New Graph OOD Learning Paradigm.}} 
   We propose LRW-OOD, which employs an OOD-aware LRW encoder to adaptively sample sequences that adhere to specific graph OOD principles. This approach enables the model to extract random walk paths that encapsulate sufficient invariant knowledge while effectively eliminating spurious correlations. This novel paradigm offers significant insights into the advancement of graph OOD learning, paving the way for future research in this domain.
    (3) \textit{\underline{SOTA Performance.}}
    We conduct a series of performance evaluations on seven benchmark datasets, comparing our proposed model, LRW-OOD, against nine state-of-the-art graph OOD generalization models. The experimental results demonstrate that the proposed LRW-OOD outperforms the most competitive baselines when utilizing both GCN and GAT as the GNN backbone, achieving an average improvement of 3.87\% on graph OOD generalization.

\section{Preliminaries}
\label{sec: Preliminaries}
\subsection{Problem Formulation}
\noindent
\textbf{The Semi-supervised Node Classification Tasks on Graphs.}
We consider a general graph representation method, denoted as $\textbf{G} = (\mathcal{V}, \mathcal{E})$, where $|\mathcal{V}|=n$ represents the number of nodes and $|\mathcal{E}|=m$ denotes the number of edges.
The adjacency matrix (including self-loops) for the graph is $\mathbf{A}\in\mathbb{R}^{n\times n}$, where each entry $\mathbf{A}(u,v)=1$ if $(u, v) \in \mathcal{E}$ and $\mathbf{A}(u, v)=0$ otherwise.
Additionally, the node feature matrix is represented as $\mathbf{X} = \{x_1,\dots,x_n\}$, where each  $x_v\in\mathbb{R}^{f}$ corresponds to the feature vector associated with node $v$. The node label is denoted as $\mathbf{y}$.
In the semi-supervised node classification paradigm, the graph is partitioned into a labeled node set $\mathcal{V}_L$ and an unlabeled set $\mathcal{V}_U$. The classification process leverages the graph topology and the node features, wherein the model is trained using $\mathcal{V}_L$ and subsequently applied to infer the labels of nodes in $\mathcal{V}_U$.

\noindent
\textbf{The OOD Generalizations on Graphs.} In this paper, we primarily investigate the problem of graph OOD generalization at the node level, where multiple distribution shifts—such as topology shifts and node feature shifts—occur between the training and testing sets. Formally, this is characterized by a discrepancy in the joint distribution, i.e., $\mathrm{Pr}_{tr}(\textbf{G}, \textbf{y})\neq \mathrm{Pr}_{te}(\textbf{G}, \textbf{y})$, where the objective is to accurately predict node labels despite these distribution shifts. 
The key factors driving such distribution shifts are referred to as confounders or environments, which can be understood through the lenses of data generation distributions and causal inference learning. Since the training and testing sets are derived from distinct environments, spurious confounders may be embedded within the correlations between the graph $\textbf{G}$ and the label $\textbf{y}$. However, despite the presence of these confounding factors, certain invariant or stable properties persist across different environments. The goal of graph OOD generalization models is to learn representations that capture these invariant factors while eliminating the influence of spurious confounders, thereby enhancing model performance under diverse distribution shifts.

\subsection{Related works}
\textbf{Graph Neural Networks.} GNNs have garnered significant attention due to their efficacy in learning high-quality representations from graph-structured data~\cite{kipf2016gcn, velivckovic2017gat, zhang2019graph_classification1, gamlp, 2019appnp, xu2018gin}. While extensive research has been conducted on the expressiveness and representational power of GNNs, their generalization capability remains an open question, particularly in scenarios where the test data is drawn from distributions different from those of the training data~\cite{arjovsky2019invariant, koyama2020out, chen2022learning}. Following the ideas of invariant risk minimization and structural causal models which reveal that the fundamental challenge of OOD generalization in graph data stems from latent confounder, this paper propose a theoretically grounded model designed to effectively extract the invariant knowledge and discard the spurious correlations across datasets under various distribution.

\textbf{OOD Generalization Learning on Graphs.} The problem of learning under distribution shifts in graph-structured data has increasingly attracted attention within the graph learning research community~\cite{wu2022handling, guo2024investigating, xia2023learning, zhu2021shift, liu2022revisiting, wu2024graph, zhu2024mario, wang2025generative}. SRGNN~\cite{zhu2021shift} attempts to address performance degradation by incorporating a regularization term and reducing the disparity between embeddings derived from the training and testing sets. DGNN~\cite{guo2024investigating} conducts extensive empirical studies and leverages self-attention mechanisms along with a decoupled architecture to facilitate OOD generalization in graph learning. However, these approaches fail to account for the essential factors contributing to the performance deterioration of GNNs under diverse distribution shifts. Consequently, their improvements remain limited compared to traditional GNNs. Recent advancements in addressing graph OOD generalization learning have predominantly centered on the core principles of IRM and SCMs, and can be broadly classified into two primary approaches: invariant topolog extraction and invariant spectrum extraction. Specifically, methods such as EERM~\cite{wu2022handling}, CIT~\cite{xia2023learning}, MARIO~\cite{zhu2024mario}, CaNet~\cite{wu2024graph} and GRM~\cite{wang2025generative} adopt the first approach by leveraging various mechanisms, including pseudo-environment-generatio, node clustering and contrastive learning, to facilitate the extraction of invariant local topological structures of nodes. Alternatively, SpCo~\cite{liu2022revisiting} follows the second approach by distinguishing between low-frequency and high-frequency components of the graph spectrum, treating the former as invariant knowledge and the latter as spurious information. This distinction enables the model to enhance OOD generalization through graph contrastive learning. However, as discussed in Section \ref{sec: Introduction}, the existence of a universally invariant topology or spectrum is not guaranteed across all graphs in OOD generalization problems. These limitations hinder the performance and generalization capability of existing models, particularly in real-world applications.

\section{Methods}
\label{sec: Methods}
\subsection{Motivation}
\label{sec: Motivation}
As discussed in Section \ref{sec: Introduction}, existing graph OOD generalization models exhibit the following limitations: (1) the assumption of an invariant graph topology is not necessarily valid across graphs under different distributions; and (2) the extraction of a universally invariant graph spectrum remains unreliable. These limitations give rise to the following question: what constitutes an appropriate instantiation of invariant knowledge across graphs under multiple distribution shifts?

While existing approaches predominantly rely on either topology or  spectrum as the carrier of invariant knowledge, we argue that learnable random walk sequences, sampled according to specific probabilistic rules, provide a more suitable representation of invariant knowledge in graph OOD scenarios. The motivations behind this claim is outlined as follows:

\textbf{Motivation 1:} In contrast to existing approaches where neither topology nor spectrum necessarily remains invariant under various distribution shifts, we posit that invariant knowledge can be instantiated in the form of learnable random walk sequences. This perspective is grounded in the observation that invariant knowledge can be encoded within the probability of transitioning to the next node that shares similar semantic information and possesses invariant features. As long as graphs exhibit common invariant patterns across distribution shifts, this property ensures that random walk sequences can effectively capture and preserve the underlying invariant knowledge.

\textbf{Motivation 2:} Furthermore, random walk sequences maintain a well-defined theoretical relationship with the formulation of graph OOD generalization. Specifically, the probability transition matrix governing these sequences can be learned in accordance with the principles underlying graph OOD generalization, ensuring the adaptability and robustness across various distribution shifts of graphs.

Building upon the prior works~\cite{wu2022handling, xia2023learning, liu2022revisiting, wu2024graph, zhu2024mario}, the problem of OOD generalization in graphs can be reformulated as an optimization task. Specifically, it involves minimizing the loss of the worst-case performance of the model across multiple graphs within all possible environments. These environments comprise both the invariant knowledge shared among them and the spurious correlations unique to each specific environment. This problem can be formally expressed as follows:

\begin{equation}
    \begin{aligned}
        \label{eq: OOD_formulation}
        \min_f \max_{\textbf{G}_e\sim \mathcal{G}}\mathcal{L}(f(\textbf{G}_e), \textbf{y}),
    \end{aligned}
\end{equation}

where $\mathcal{G}$ is the set of graphs under all environments, $\textbf{G}_e$ is the graph under the environment $e$, $\mathcal{L}$ is the loss function, $f$ is the graph OOD model and $\textbf{y}$ is the label. However, this optimization formulation cannot be directly applied to graph OOD generalization due to the inaccessibility of environmental factors $e$. To address this challenge, Wu et al.~\cite{wu2022handling} introduces two conditions that are theoretically equivalent to the aforementioned graph OOD generalization formulation while also being directly applicable to graph OOD learning. These conditions are formally stated as follows: (1) sufficient condition: $\textbf{y}=f^*(\textbf{G}_e)+\sigma \leftrightarrow \max_f I(\textbf{y},f)$, where $f^*$ denotes the model $f$ with the optimal parameters, while $\sigma$ represents the random variable entirely independent of the predicted label $\textbf{y}$. (2) invariance condition: $\mathrm{Pr}(\textbf{y}|e)=\mathrm{Pr}(\textbf{y})\leftrightarrow \min_f I(\textbf{y},e|f)$. 

It is important to note that random walk sequences can inherently incorporate the two graph OOD generalization conditions by parameterizing the probability transition matrix in accordance with these principles. Specifically, rather than adopting the conventional approach of fixing the probability transition matrix as the degree-normalized adjacency matrix~\cite{xie2023pathmlp, su2024dirw}, it can instead be parameterized as a learnable matrix. This matrix is initialized using the cosine similarity matrix, which captures the semantic similarity between the soft embeddings of node pairs within graphs, thereby integrating node semantic information into the random walk sequences.
However, the semantic similarity matrix alone cannot reliably capture the invariant knowledge across graphs under multiple distribution shifts, as it is susceptible to spurious correlations arising in different environments. To address this limitation, the learnable probability transition matrix must be guided by a distribution-invariant loss that adheres to the two graph OOD generalization conditions outlined above. The details of this approach will be elaborated in the following section.

\subsection{Model Framework}
\label{sec: model_framework}
The overall framework of the proposed LRW-OOD method is illustrated in Figure~\ref{fig: LRW-OOD Framework}. Our approach adopts a two-stage training paradigm for OOD generalization in graph learning: (1) the initial stage focuses on training an OOD-aware Learnable Random Walk (LRW) encoder, and (2) the subsequent stage involves training a GNN-based classifier. The LRW encoder is designed to produce high-quality node embeddings that capture invariant features across diverse distribution shifts. These embeddings are then utilized by the GNN classifier—built upon standard GNN backbones such as GCN~\cite{kipf2016gcn} or GAT~\cite{velivckovic2017gat}—through a weight-free embedding aggregator (i.e., mean-pooling or concatenation) to support various downstream tasks. Given that the primary objective of this paper is to enable graph OOD generalization learning via learnable random walk sequences, the subsequent of the section will primarily focus on the detailed formulation and design of the proposed OOD-aware LRW encoder.
\begin{figure*}[t]
  \includegraphics[width=\textwidth]{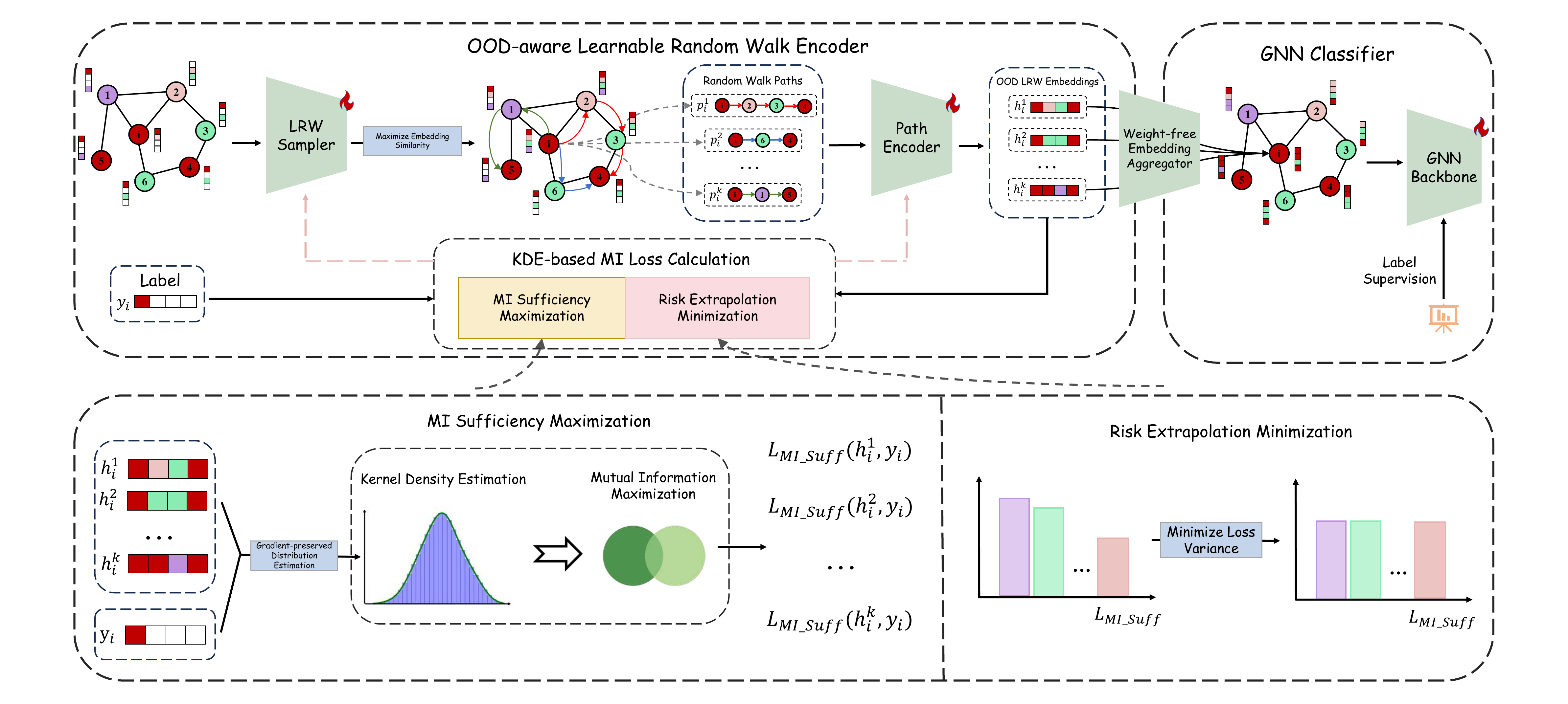}
  \centering
  \captionsetup{font={small,stretch=1}}
  \caption{The framework of the proposed LRW-OOD.}
  \label{fig: LRW-OOD Framework}
\end{figure*}

\textbf{OOD-aware Learnable Random Walk Encoder.} To enable the extraction of invariant knowledge from graphs under various distribution shifts, we introduce the OOD-aware Learnable Random Walk (LRW) encoder, depicted in the top-left corner of Figure~\ref{fig: LRW-OOD Framework}. Departing from conventional random walk strategies—which typically utilize a fixed degree-normalized adjacency matrix as the transition probability matrix~\cite{xie2023pathmlp, su2024dirw}—we propose a more adaptive approach. Specifically, the LRW sampler employs a GNN to generate soft node embeddings from the original node features. These embeddings are then used to construct a similarity-based transition matrix, which guides the sampling of $k$ random walk paths. This mechanism enables the integration of both topological and semantic information inherent in the graph data. Formally, for a given node $v_i$, and given the graph topology $\mathbf{A}$ and node features $\mathbf{X}$, the LRW sampler generates $k$ random walk paths $\left \{p_i^r\right \}_{r=1}^k$ as follows:
\begin{equation}
    \begin{aligned}
        \label{eq: LRW_sampler}
        \{p_i^r\}_{r=1}^k = \mathrm{RW}(v_i, k), \quad
        \text{s.t.} \Pr_{v_j \in \mathcal{N}(v_i)}(v_i \rightarrow v_j) = \cos(\mathbf{z}_i, \mathbf{z}_j), \quad\mathbf{z} &= \mathrm{GNN}(\mathbf{A}, \mathbf{X}),
    \end{aligned}
\end{equation}
where $\mathcal{N}(v_i)$ denotes the set of neighboring nodes of $v_i$, and $\mathrm{RW}(v_i, k)$ represents the random walk sampler that generates $k$ random walk paths initiated from node $v_i$.
After obtaining $k$ random walk paths started at node $v_i$, we then apply a path encoder (i.e., MLP) to transform the embeddings of nodes along the $r$-th random walk path into the LRW embeddings $\mathbf{h}_i^r$:
\begin{equation}
    \begin{aligned}
        \label{eq: Path_encoder}
        \mathbf{h}_i^r = \mathrm{MLP}(\mathrm{Concat}( \left \{ \mathbf{z}_j\right \}_{v_j\in p_i^r})), \quad\text{s.t.} \quad 1\le r \le k.
    \end{aligned}
\end{equation}
The resulting LRW embeddings $\left \{\mathbf{h}_i^r \right \}_{r=1}^k$, together with the corresponding label $\mathbf{y}_i$ of node $v_i$, are subsequently input into a kernel density estimation (KDE)-based mutual information (MI) loss module. This component serves as a key component within the LRW encoder, facilitating the model’s ability to capture the invariant knowledge and discard the spurious correlations under OOD scenarios.

\textbf{KDE-based MI Loss Calculator.} As discussed in Section~\ref{sec: Motivation}, the graph OOD generalization can be formulated as an optimization task aimed at minimizing the loss under the worst-case distribution shift. Also, this objective is theoretically equivalent to the two OOD conditions: (1) sufficiency condition, expressed as $\textbf{y}=f^*(\textbf{G}_e)+\sigma \leftrightarrow \max_f I(\textbf{y},f)$, and (2) invariance condition, given by $\mathrm{Pr}(\textbf{y}|e)=\mathrm{Pr}(\textbf{y})\leftrightarrow \min_f I(\textbf{y},e|f)$. To encourage the LRW encoder to generate node embeddings that capture invariant features while discarding spurious correlations, we introduce two complementary objectives: the \textit{MI sufficiency maximization loss} and the \textit{risk extrapolation minimization loss}, which correspond to the sufficiency and invariance conditions, respectively.

To achieve the sufficiency condition, a natural approach would be to directly maximize the mutual information~\cite{steuer2002mutual, kraskov2004estimating} between the LRW embeddings $\left \{\mathbf{h}_i^r\right \}_{r=1}^k$ and the corresponding label $\mathbf{y}_i$. However, this approach is unfeasible due to the agnosticism of the latent label distribution $\mathrm{Pr}(\mathbf{y})$ in the context of graph OOD generalization~\cite{wu2024graph, zhu2024mario}. As a result, traditional graph OOD models often resort to distance-based loss functions, such as KL divergence, to approximately satisfy the sufficiency condition~\cite{wu2022handling, wang2025generative}. Nevertheless, these methods are prone to inaccuracies due to the approximations they involve. In contrast, our proposed MI sufficiency maximization approach leverages an estimation-based mechanism to approximate the distributions of the LRW embeddings $\left \{\mathrm{Pr}(\mathbf{h}_i^r)\right \}_{r=1}^k$ and the label distribution $\mathrm{Pr}(\mathbf{y})$, allowing the model to directly compute and maximize the mutual information between these variables. Rather than relying on histogram-based estimation methods—which are non-differentiable due to the discrete partitioning of the hypercubes—we utilize kernel density estimation (KDE) with a Gaussian kernel~\cite{moon1995estimation, steuer2002mutual, kraskov2004estimating}. This choice is motivated by the differentiability of KDE and its computational efficiency, which allows for gradient-based optimization. The formal expression of this approach is provided below:
\begin{equation}
    \begin{aligned}
        \label{eq: kernel_density_estimation}
        \mathcal{L}_{\mathbf{MI\_Suff}}(\mathbf{h}_i^r, \mathbf{y}_i)=-\hat{\mathrm{Pr}}(\mathbf{h}_i^r,\mathbf{y}_i)\log_2\frac{\hat{\mathrm{Pr}}(\mathbf{h}_i^r, \mathbf{y}_i)}{\hat{\mathrm{Pr}}(\mathbf{h}_i^r)\hat{\mathrm{Pr}}(\mathbf{y}_i))},\quad
        \text{s.t.} \quad \hat{\mathrm{Pr}}(\mathbf{s})=\frac{e^{-u^2/2}}{(2\pi)^{d/2}m^d det(\mathbf{S}^{1/2})},\\
    \end{aligned}
\end{equation}
where $d$ represents the dimension of the random variable $\mathbf{s}$, $m$ denotes the kernel bandwidth, $\mathbf{S}$ is the covariance matrix associated with the random variable $\mathbf{s}$, and $u$ signifies the variance of $\mathbf{s}$. Meanwhile, following the prior works~\cite{wu2022handling, wang2025generative}, we introduce the risk extrapolation minimization loss by minimizing the variance among the MI sufficiency maximization losses $\left \{ \mathcal{L}_{\mathrm{MI\_Suff}}(\mathbf{h}_i^r, \mathbf{y}_i) \right \}_{r=1}^k$. Based on the illustration above, we propose the overall KDE-based MI loss formulated as below: 
\begin{equation}
    \begin{aligned}
        \label{eq: kde_mi_loss}
        \mathcal{L}= \sum_{i=1}^n \left ( \mathbb{V}(\left \{ \mathcal{L}_{\mathrm{MI\_Suff}}(\textbf{h}_i^r,\textbf{y}_i) \right \}_{r=1}^k)+\frac{1}{k}\sum_{r=1}^k \mathcal{L}_{\mathrm{MI\_Suff}}(\textbf{h}_i^r,\textbf{y}_i) \right ).
    \end{aligned}
\end{equation}

\subsection{Theoretical Discussion}
\label{sec:theo_discuss}
In this section, we will illustrate the theoretical guarantee that the proposed KDE-based MI loss formulated as equation~\ref{eq: kde_mi_loss} is capable of guiding a valid random walk sampling for graph OOD generalization formulated as Equation~\ref{eq: OOD_formulation}. Moreover, we will also give a theoretical upper bound of LRW-OOD, showcasing the computation-friendly property of the proposed model. The detailed proofs for the following theorems are illustrated in the Appendix~\ref{sec:appen_theo_ana}.
\begin{theorem}
\label{theo:1}
Let $f(\mathbf{G}_e)$ denotes the learnable random walk encoder. If it is optimized by minimizing the KDE-based MI loss defined in Equation~\ref{eq: kde_mi_loss}, then the resulting encoder satisfies both the sufficiency condition: $\textbf{y}=f^*(\textbf{G}_e)+\sigma$ and the invariance condition: $\mathrm{Pr}(\textbf{y}|e)=\mathrm{Pr}(\textbf{y})$.
\end{theorem}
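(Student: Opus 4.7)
My plan is to decompose the loss of Equation~\ref{eq: kde_mi_loss} into its two additive pieces — the per-node mean $\frac{1}{k}\sum_r \mathcal{L}_{\mathrm{MI\_Suff}}(\mathbf{h}_i^r, \mathbf{y}_i)$ and the per-node variance $\mathbb{V}(\{\mathcal{L}_{\mathrm{MI\_Suff}}(\mathbf{h}_i^r, \mathbf{y}_i)\}_{r=1}^k)$ — and to show that these two pieces respectively enforce the sufficiency and invariance equivalences recalled in Section~\ref{sec: Motivation}. Since the equivalences $\max_f I(\mathbf{y}, f) \leftrightarrow \mathbf{y} = f^*(\mathbf{G}_e) + \sigma$ and $\min_f I(\mathbf{y}, e \mid f) \leftrightarrow \mathrm{Pr}(\mathbf{y}\mid e) = \mathrm{Pr}(\mathbf{y})$ are already granted, the task reduces to linking each loss component to the corresponding (conditional) mutual information of the learnable random walk encoder $f$.

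First I would treat sufficiency. By inspection, $\mathcal{L}_{\mathrm{MI\_Suff}}(\mathbf{h}_i^r, \mathbf{y}_i)$ is the negation of a single summand in the standard plug-in estimator of mutual information, with the joint and marginal densities replaced by Gaussian KDEs as defined in Equation~\ref{eq: kernel_density_estimation}. Summing over nodes and averaging over the $k$ sampled paths therefore yields a consistent estimator of $-I(\mathbf{h}, \mathbf{y}) = -I(f(\mathbf{G}_e), \mathbf{y})$, provided the KDE bandwidth is chosen in the classical regime; this is the content of standard results on KDE-based MI estimation. Driving the mean term to its minimum thus asymptotically maximizes $I(\mathbf{y}, f)$, which by the first equivalence produces the sufficiency condition. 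For invariance I would interpret the $k$ independently sampled random walk paths as $k$ realizations of the latent environment variable $e$ — a reading made legitimate by the fact that the transition matrix in Equation~\ref{eq: LRW_sampler} is itself learnable and can absorb environment-induced distortions, so that the sampled paths act as environment-level data augmentations. Under this identification the path-indexed losses approximate $-I(\mathbf{h}, \mathbf{y} \mid e = r)$, the variance term becomes a Risk-Extrapolation-style penalty, and minimizing it forces the conditional mutual informations to be constant in $e$. Combining constancy with the already-maximal $I(\mathbf{h}, \mathbf{y})$ yields $I(\mathbf{y}, e \mid f) = 0$, i.e.\ the invariance condition.

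The main obstacle will be the path–environment identification underpinning the invariance argument. A priori the $k$ learnable paths are samples from a single transition distribution conditioned on a fixed graph, not from the unobserved environment distribution, so I must argue that during joint optimization the LRW sampler spreads its paths across the environment-induced modes rather than collapsing to one mode; here the variance penalty itself plays a stabilizing role, since a collapsed sampler would trivialize the penalty and forfeit any signal the sufficiency term receives from shifted subpopulations. A secondary, more routine obstacle is making the KDE plug-in rigorous enough to license the claim that a minimizer of the empirical loss attains the population maximum of $I(\mathbf{y}, f)$; uniform consistency of Gaussian KDEs on compact embedding supports, together with the continuous mapping theorem, suffices, and I would defer the quantitative version to Appendix~\ref{sec:appen_theo_ana}.
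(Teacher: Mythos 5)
Your proposal follows essentially the same route as the paper's proof: the mean term is tied to sufficiency via the almost-sure consistency of the Gaussian-KDE mutual-information estimator (the paper's Lemma~\ref{lemma_1}) together with the granted equivalence $\max_f I(\mathbf{y},f)\leftrightarrow \mathbf{y}=f^*(\mathbf{G}_e)+\sigma$, and the variance term is tied to invariance by reading the $k$ path-wise losses as environment-indexed risks so that the variance penalty controls $I(\mathbf{y},e\mid \mathbf{h})$, which the paper reaches through the chain $I(\mathbf{y},e)\le I(\mathbf{y},e\mid\mathbf{h})\le \mathbb{V}_e(\mathbf{y}\mid\mathbf{h})$ via convexity of the KL divergence and Jensen's inequality. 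The path--environment identification you flag as the main obstacle is precisely the step the paper also asserts without justification, so your write-up is consistent with, and if anything more candid than, the published argument.
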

Building upon Theorem~\ref{theo:1}, we establish the theoretical guarantee that connects the OOD conditions to the formulation of graph OOD generalization through the following theorem:
\begin{theorem}
\label{theo:2}
Let $f^*(\mathbf{G}_e)$ denotes the optimized learnable random walk encoder satisfying both the sufficiency and invariance conditions. Then, the encoder $f^*(\mathbf{G}_e)$ is the solution to the graph OOD generalization formulated as Equation~\ref{eq: OOD_formulation}.
\end{theorem}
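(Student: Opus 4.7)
The plan is to establish that $f^*$ attains the min--max objective of Equation~\ref{eq: OOD_formulation} by showing two complementary facts about it: its loss is pointwise optimal on every environment (via the sufficiency condition) and its loss is constant across environments (via the invariance condition). Together these will force the worst-case risk incurred by $f^*$ to coincide with the pointwise minimum, which no competing encoder can beat.

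First I would unpack the sufficiency condition $\mathbf{y} = f^*(\mathbf{G}_e) + \sigma$ with $\sigma$ independent of $\mathbf{y}$, which by Theorem~\ref{theo:1} is equivalent to $f^* \in \arg\max_f I(\mathbf{y}, f)$. Thus $f^*(\mathbf{G}_e)$ captures all the predictive information about $\mathbf{y}$ extractable from $\mathbf{G}_e$. Writing the loss (cross-entropy or MSE) in the divergence form $\mathcal{L}(f(\mathbf{G}_e), \mathbf{y}) = H(\mathbf{y}) - I(\mathbf{y}; f) + \mathrm{gap}(f)$, where $\mathrm{gap}(f) \ge 0$ measures the KL discrepancy between the induced conditional and the true $\Pr(\mathbf{y}\mid\mathbf{G}_e)$, the maximality of $I(\mathbf{y}; f^*)$ together with $\mathrm{gap}(f^*)=0$ under the additive-noise decomposition yields $\mathcal{L}(f(\mathbf{G}_e), \mathbf{y}) \ge \mathcal{L}(f^*(\mathbf{G}_e), \mathbf{y})$ on every environment $e$.

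Next I would use the invariance condition $\Pr(\mathbf{y}\mid e) = \Pr(\mathbf{y})$, equivalently $I(\mathbf{y}, e \mid f^*) = 0$, to argue that $\mathcal{L}(f^*(\mathbf{G}_e), \mathbf{y})$ is constant in $e$. Expanding the loss as an expectation over $\Pr(f^*(\mathbf{G}_e), \mathbf{y} \mid e)$ and invoking $\Pr(\mathbf{y} \mid f^*, e) = \Pr(\mathbf{y} \mid f^*)$, the environment index drops out of the inner conditional; integrating against the marginal of $f^*(\mathbf{G}_e)$ then yields a functional of $f^*$ alone. This is the step I expect to be the principal obstacle, because it demands a careful transition from a distributional conditional-independence statement to a genuine equality of expected-loss functionals: the marginal of $f^*(\mathbf{G}_e)$ may still depend on $e$, and one must either argue that the loss only sees the conditional, or impose a mild regularity assumption so that the environment-wise expectations coincide.

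Finally I would combine the two ingredients. For any candidate encoder $f$,
\begin{equation*}
\max_{e}\mathcal{L}(f(\mathbf{G}_e), \mathbf{y}) \;\ge\; \mathbb{E}_e\!\left[\mathcal{L}(f(\mathbf{G}_e), \mathbf{y})\right] \;\ge\; \mathbb{E}_e\!\left[\mathcal{L}(f^*(\mathbf{G}_e), \mathbf{y})\right] \;=\; \mathcal{L}(f^*(\mathbf{G}_e), \mathbf{y}) \;=\; \max_{e}\mathcal{L}(f^*(\mathbf{G}_e), \mathbf{y}),
\end{equation*}
where the first inequality is the standard max-vs-mean bound, the second uses the pointwise Bayes optimality supplied by sufficiency, and the last two equalities use the environment-invariance of the loss under $f^*$ supplied by invariance. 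Hence $f^*$ attains $\min_f \max_e \mathcal{L}(f(\mathbf{G}_e), \mathbf{y})$, which is precisely the claim of Theorem~\ref{theo:2}.
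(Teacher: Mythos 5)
Your proposal follows essentially the same route as the paper's proof: sufficiency gives within-environment optimality of $f^*$ over any competitor $f$, invariance makes the loss of $f^*$ uniform across environments, and chaining the two yields the min--max claim. You are in fact somewhat more careful than the paper — you explicitly flag the delicate step of passing from the conditional-independence statement $\Pr(\mathbf{y}\mid f^*,e)=\Pr(\mathbf{y}\mid f^*)$ to equality of the environment-wise expected losses, which the paper asserts without justification — and note only that your detour through $\mathbb{E}_e$ is unnecessary once pointwise optimality holds on every environment, since $\max_e\mathcal{L}(f)\ge\max_e\mathcal{L}(f^*)$ then follows directly.
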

The preceding theorems establish that the learnable random walk encoder, optimized by the KDE-based MI loss, is theoretically capable of generating random walk sequences that preserve invariant information while effectively eliminating spurious correlations across datasets exhibiting distributional shifts. Furthermore, with the OOD-awareness guaranteed by theorems above, we proceed to demonstrate the model’s time efficiency and memory efficiency through the following theorem:
\begin{theorem}
\label{theo:3}
Let $n,k,s,d$ be the number of nodes, the times of random walk per node, the walk length per node and the feature dimension, $l_1$ be the number of layers for the LRW sampler, and $l_2$ be the number of layers for the path encoder. Then, the overall time complexity and space complexity of LRW-OOD is $\mathcal{O}\left(nd^2(l_1+l_2)+nksd\right)$ and $\mathcal{O}\left(nd(l_1+l_2)\right)$, respectively.
\end{theorem}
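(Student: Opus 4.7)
The plan is to decompose the forward cost of LRW-OOD into four computational stages---the LRW sampler GNN, random walk generation, path encoder MLP, and the KDE-based MI loss---bound each stage's time and space separately, and then sum. Since each stage corresponds directly to a formula already in the paper, the argument reduces to a careful per-layer arithmetic count.

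First I would analyze the $l_1$-layer GNN that produces $\mathbf{z}$ in Equation~\ref{eq: LRW_sampler}: each layer performs a linear map on $n$ feature vectors of dimension $d$, contributing $\mathcal{O}(nd^2)$ time and $\mathcal{O}(nd)$ activation memory per layer, giving $\mathcal{O}(nd^2 l_1)$ time and $\mathcal{O}(ndl_1)$ space overall; neighbor aggregation on a sparse graph is absorbed into this bound under standard sparse-matrix implementations. Next I would bound the random-walk step: for each of the $n$ nodes we sample $k$ walks of length $s$, where each transition requires $\mathcal{O}(d)$ work to evaluate the cosine-similarity transition probabilities on a node's neighbors and draw the next hop, yielding $\mathcal{O}(nksd)$ time and $\mathcal{O}(nks)$ scratch space that can be released as soon as the path embeddings are formed.

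Then I would bound the path encoder of Equation~\ref{eq: Path_encoder}. Treating it as an $l_2$-layer MLP with hidden width $\mathcal{O}(d)$, each forward pass costs $\mathcal{O}(d^2 l_2)$ time and $\mathcal{O}(d l_2)$ activation memory; amortizing over $n$ nodes under the convention that $k$ is a constant in the asymptotic parameters gives $\mathcal{O}(nd^2 l_2)$ time and $\mathcal{O}(ndl_2)$ space. Finally, the KDE-based MI loss in Equation~\ref{eq: kde_mi_loss} reduces to Gaussian kernel evaluations and a variance computation on fixed-dimensional embeddings and label encodings per node; this adds only a lower-order term dominated by the prior bounds. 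Summing the four contributions yields $\mathcal{O}\!\left(nd^2(l_1+l_2)+nksd\right)$ time and $\mathcal{O}\!\left(nd(l_1+l_2)\right)$ space.

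The main obstacle I anticipate is handling the dependence of the path encoder on the concatenation length $sd$ honestly. A literal reading of Equation~\ref{eq: Path_encoder} makes the first MLP layer cost $\mathcal{O}(sd^2)$ per walk, which would introduce a spurious $ks$ factor into the encoder term and violate the claimed bound. I would discharge this either by interpreting the first MLP layer as a position-shared linear projection, so the $s$-dependence of the encoder collapses into the $nksd$ walk-sampling term already present, or by folding $s$ and $k$ into the implicit architectural constants. Writing this reduction cleanly is the only non-routine step; everything else follows from standard per-layer GNN and MLP accounting.
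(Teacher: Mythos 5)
Your proposal is correct and follows essentially the same stage-by-stage accounting as the paper's own proof: $\mathcal{O}(nd^2 l_1)$ for the GNN sampler, $\mathcal{O}(nksd)$ for the cosine-similarity transitions, $\mathcal{O}(nd^2 l_2)$ for the path encoder, and embedding storage of $\mathcal{O}(nd(l_1+l_2))$. The one place you go beyond the paper is in flagging that a literal reading of the $\mathrm{Concat}$ input to the path encoder would cost $\mathcal{O}(nksd^2)$ in its first layer; the paper's proof silently treats the encoder as acting on $n$ vectors of width $d$, so your explicit discharge of the $s$- and $k$-dependence (via a position-shared first layer or by absorbing them into architectural constants) is a welcome tightening rather than a deviation.
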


\section{Experiments}
\label{sec: Experiments}
In this section, we conduct a series of comprehensive experiments to evaluate the performance of the proposed LRW-OOD model in the context of node classification tasks across multiple datasets subjected to various distribution shifts. The objective of these experiments is to address the following research questions: \textbf{Q1}: How does the proposed model perform compared to state-of-the-art models on datasets under various distribution shifts? \textbf{Q2}: To what extent do the proposed components of LRW-OOD contribute to its OOD generalization capabilities on graph data? \textbf{Q3}: How sensitive is the performance of the proposed model to variations in its hyperparameters? \textbf{Q4}: What insights can be obtained from visualizing the representations of the proposed model?

\subsection{Experiment Setup}
\label{sec:experiment_setup}
\textbf{Datasets.} In line with prior studies, we employ seven node classification datasets that exhibit diverse sizes, characteristics, and types of distribution shifts. These datasets are categorized as follows: (1) synthetic datasets: Cora~\cite{Yang16cora}, CiteSeer~\cite{Yang16cora}, PubMed~\cite{Yang16cora}, and LastFMAsia~\cite{rozemberczki2020characteristic}; (2) cross-domain datasets: Twitch~\cite{rozemberczki2021multi} and WebKB~\cite{pei2020geomgcn}; and (3) temporal evolution dataset: ogb-ArXiv~\cite{hu2020ogb}.
For the synthetic datasets, we augment the original node features with the artificially generated noise of environment-specific spurious correlations, whose dimension $d_{\mathrm{spu}}$ varies from 20 to 160. Nodes associated with different spurious environments are then assigned to the training and testing sets, respectively, to simulate distributional shifts. In the case of the cross-domain datasets, we perform a subgraph-level split, such that nodes from distinct domains are separated into the training and testing sets. For the temporal evolution dataset, we conduct a chronological split based on publication year, using papers published before 2017 for training, those published in 2018 for validation and the remaining for testing.
A comprehensive summary of the datasets is provided in Table~\ref{tab:dataset_information}, with additional details about data pre-preprocessing presented in Appendix~\ref{dataset_appendix}.

\textbf{Baselines.} We evaluate the performance of the proposed LRW-OOD model against nine state-of-the-art graph OOD generalization methods, namely ERM, IRM, SRGNN, SpCo, EERM, CIT, DGNN, CaNet, and MARIO. To mitigate the effects of randomness and ensure reliable evaluation, each experiment is repeated 10 times, and the average performance is reported. Considering the large number of baseline models, we strategically diversify their inclusion across different experimental settings. This approach facilitates comprehensive comparisons while avoiding overly complex visualizations, thereby enhancing the clarity and interpretability of the results.

\textbf{Experiment Environment.} To facilitate reproducibility, we report the hardware and software configurations used in our experiments. All experiments were conducted on a server equipped with an Intel(R) Xeon(R) Gold 6240 CPU 2.60GHz, and a NVIDIA A800 GPU with 80GB memory, utilizing CUDA version 12.4. The operating system is Ubuntu 20.04.4 LTS and is configured with 1.0 TB of system memory.

\subsection{Performance Comparison}
\label{sec:performance_comparison}

In this Section, we aim to answer \textbf{Q1}: How does the proposed LRW-OOD perform compared to state-of-the-art models on datasets under various distribution shifts? To this end, We conduct a comprehensive evaluation of our model against other models across various kinds of datasets, including synthetic datasets, cross-domain datasets and the temporal evolution dataset. The results are shown in Figure~\ref{fig: exp_generalization_1}, Figure~\ref{fig: exp_generalization_2} and Table~\ref{tab:performance_comparison}. Due to the page limit, Figure~\ref{fig: exp_generalization_2} is put in the Appendix~\ref{appen_add_exp}.

\begin{figure*}[t]
  \includegraphics[width=\textwidth]{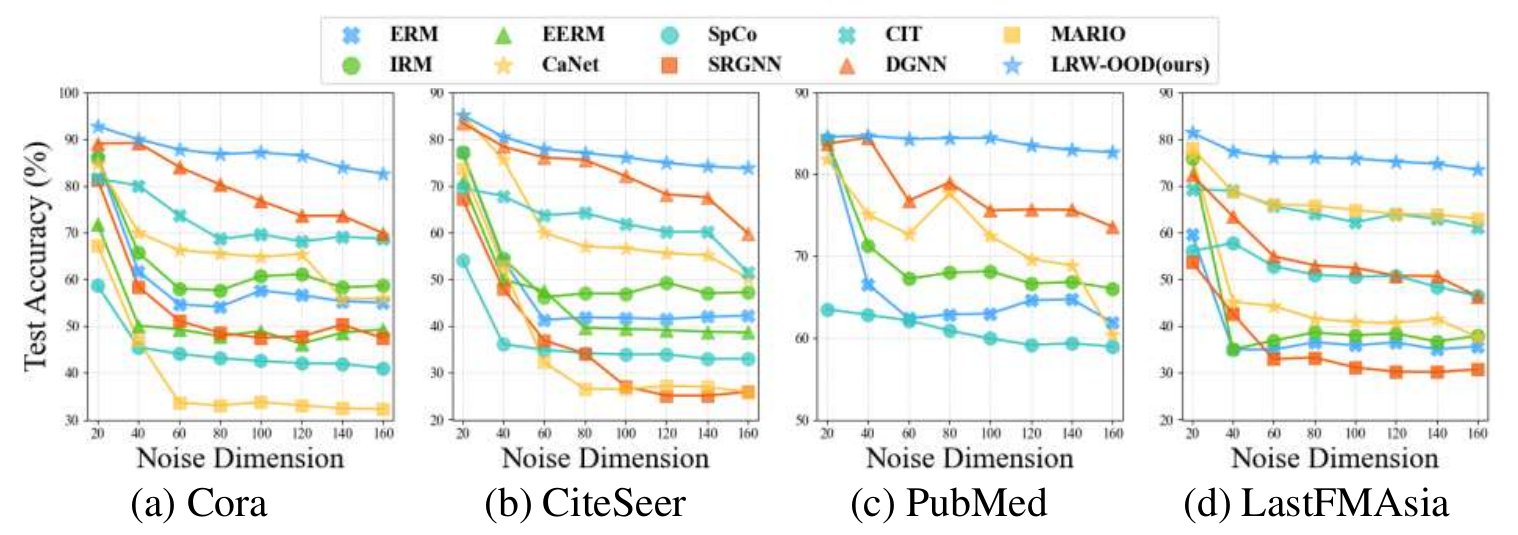}
  \centering
  \captionsetup{font={small,stretch=1}}
  \caption{The performance comparison of graph OOD models using GCN as the backbone.}
  \label{fig: exp_generalization_1}
\end{figure*}
\begin{table}[h!t]
\centering
\scriptsize
\setlength{\tabcolsep}{2.5pt} 
\setlength{\heavyrulewidth}{1.5pt} 
\captionsetup{font={small, stretch=1}}
\caption{The performance of graph OOD models on the corss-domain and temporal evolution datasets.}
\begin{tabular}{@{}c|c|cccccccccc@{}}
\toprule
\textbf{Dataset} & \textbf{Backbone} & \textbf{ERM} & \textbf{IRM} & \textbf{EERM} & \textbf{SpCo} & \textbf{SRGNN} & \textbf{CIT} & \textbf{CaNet} & \textbf{DGNN} & \textbf{MARIO} & \textbf{LRW-OOD} \\ \midrule
 & GCN & 52.1±1.6 & 52.1±1.7 & OOM & 49.0±0.4 & 47.8±2.4 & 53.9±0.3 & 54.3±0.8 & 52.7±3.1 & 53.9±0.1 & \textbf{55.5±1.0} \\
\multirow{-2}{*}{Twitch} & GAT & 51.8±1.0 & 51.5±1.0 & OOM & 46.9±1.6 & 53.9±0.1 & 54.0±0.4 & 54.0±0.2 & 53.8±0.2 & 54.4±0.3 & \textbf{55.3±1.3} \\ \midrule
 & GCN & 9.8±0.1 & 9.8±0.1 & 28.4±10.3 & 9.8±0.1 & 9.9±0.2 & 16.0±1.0 & 11.9±5.7 & 48.8±12.7 & 40.4±18.2 & \textbf{52.0±1.9} \\
\multirow{-2}{*}{WebKB} & GAT & 9.9±0.2 & 10.6±2.0 & 29.5±18.2 & 15.3±2.9 & 56.5±0.9 & 9.8±0.1 & 19.5±9.4 & 37.7±21.9 & 41.4±17.3 & \textbf{56.7±3.1} \\ \midrule
 & GCN & 52.7±0.2 & 53.1±0.4 & 39.7±1.4 & OOM & 48.3±0.5 & OOM & 52.8±2.0 & 44.8±0.6 & OOM & \textbf{57.7±0.1} \\
\multirow{-2}{*}{ogb-ArXiv} & GAT & 52.8±0.2 & 53.9±0.3 & 46.7±0.5 & OOM & 49.2±0.7 & OOM & 57.1±2.7 & 43.1±2.1 & OOM & \textbf{58.9±0.1} \\ \bottomrule
\end{tabular}
\label{tab:performance_comparison}
\end{table}
\textbf{Distribution Shifts on Synthetic Datasets.} We report the testing accuracy on the Cora, CiteSeer, PubMed, and LastFMAsia, each augmented with environment-specific spurious noise features of varying dimensions, ranging from 20 to 160, as illustrated in Figure~\ref{fig: exp_generalization_1} and Figure~\ref{fig: exp_generalization_2}. While all evaluated models exhibit a decline in performance as the dimensionality of the spurious features increases, the proposed LRW-OOD consistently achieves superior accuracy compared to all baseline methods, regardless of employing GCN or GAT as the backbone. These results highlight the robust graph OOD generalization capability of LRW-OOD in the presence of complex distribution shifts.

\textbf{Distribution Shifts on Cross-domain \& Temporal Evolution Datasets.} As presented in Table~\ref{tab:performance_comparison}, we report the test accuracy on the WebKB, Twitch, and ogb-ArXiv. The proposed LRW-OOD demonstrates consistently strong performance, significantly outperforming all baseline methods, despite the challenges of substantial domain/temporal distribution shifts within these datasets. Specifically, LRW-OOD achieves an average improvement of approximately 3.1\% when using the GCN backbone and around 1.0\% with the GAT backbone compared to the sub-optimal baselines. These results underscore the model’s ability to effectively learn invariant representations while mitigating the influence of domain-specific and temporal-specific spurious correlations, thereby highlighting its potential for robust deployment in real-world scenarios.

\subsection{Ablation Study}
\label{sec:ablation_study}
To answer \textbf{Q2}, we present a comprehensive ablation analysis of the contribution of each individual component within LRW-OOD across Cora, CiteSeer, PubMed, and LastFMAsia. The experiment results are summarized in Table~\ref{tab:ablation_performance}, which is put in Appendix~\ref{appen_add_exp} due to the page limit. Specifically, w/o SM refers to the variant of LRW-OOD where the sufficiency maximization loss is modified by substituting the original KDE-based approach with KL-divergence, following established methodologies in prior literature~\cite{wu2022handling, wang2025generative}. w/o REM indicates the model configuration without the risk extrapolation minimization loss. Finally, w/o LRW corresponds to the vanilla random-walk-based GNN model without the proposed LRW encoder.
From Table~\ref{tab:ablation_performance}, it can be observed that LRW-OOD without the LRW encoder achieves the worst performance compared to that without SM loss and that without REM loss, which emphasis the great importance of the LRW encoder as the extractor of the invariant knowledge. Meanwhile, we also observe that LRW-OOD without REM loss achieves a better performance compared to that without SM loss on most of the datasets, which demonstrates that the SM loss plays a more important role than the REM loss for the graph OOD generalization.

\subsection{Sensitivity Analysis}
\label{sec:sensitivity_analysis}
To answer \textbf{Q3}, we investigate the impact of key hyperparameters—specifically, the number of random walk steps and the number of random walk times—on the performance of the proposed LRW-OOD. The evaluation is conducted across multiple datasets, including Cora, CiteSeer, PubMed, and LastFMAsia. The corresponding experimental results are presented in Table~\ref{tab:walk_len} and Table~\ref{tab:walk_time}, which are put in Appendix~\ref{appen_add_exp} due to the page limit.
As illustrated in Table~\ref{tab:walk_len}, LRW-OOD achieves optimal performance with a single-step on the Cora, CiteSeer, and PubMed. In contrast, more walk steps are required to attain the best results on LastFMAsia. This discrepancy can be attributed to differing levels of homophily across datasets: Cora, CiteSeer, and PubMed exhibit more homophily, wherein the invariant knowledge is predominantly localized within the 1-hop neighborhood. Conversely, LastFMAsia demonstrates lower homophily, implying that invariant patterns reside in higher-order neighborhoods. As shown in Table~\ref{tab:walk_time}, LRW-OOD consistently achieves optimal performance when multiple random walks are initiated per node across all evaluated datasets. This observation suggests that each individual random walk may capture environment-specific spurious correlations. By conducting multiple random walks, LRW-OOD is better equipped to retain invariant patterns while effectively mitigating the influence of spurious correlations across walks.

\subsection{Model Visualization}
\label{sec:model_visualization}
To address \textbf{Q4}, we visualize the LRW embeddings derived from different orders of random walk paths on four synthetic datasets: Cora, CiteSeer, PubMed, and LastFMAsia, as shown in Figure~\ref{fig: visualization} which is put in Appendix~\ref{appen_add_exp} due to the page limit. The visualizations reveal that LRW embeddings generated from different walk orders exhibit clear distinctions across all datasets. Notably, these datasets are constructed with varying environment-specific spurious correlations, as described in Section~\ref{sec:experiment_setup}. Consequently, the model learns to encode distinct representations from random walks, effectively mitigating spurious correlations and capturing invariant information shared across environments with various distribution shifts. This results in a more expressive representation that facilitates the learning of predictive relationships beneficial for graph OOD generalization.

\section{Conclusion}
\label{sec: Conclusion}
In this paper, we introduce LRW-OOD, a novel approach for graph OOD generalization at the node level, which necessitates the model's ability to handle multiple distribution shifts between the training set and the testing set. Distinct from existing methods that primarily rely on graph topology or spectral properties as the medium for invariant knowledge, our method leverages Learnable Random Walk (LRW) sequences to capture such invariant representations. Rather than utilizing a conventional fixed-probability transition matrix (e.g., the degree-normalized adjacency matrix), our framework employs an LRW-based sampler alongside a path encoder to learn LRW embeddings that parameterize the transition probabilities of the random walk.
To ensure the generated random walk sequences conform to the OOD principles, we further propose a KDE-based MI loss, which integrates an MI sufficiency maximization component and a risk extrapolation minimization component. Extensive experimental evaluations demonstrate the superior performance of LRW-OOD in addressing diverse types of distribution shifts across various graph datasets.

\clearpage

\bibliographystyle{plain}
\bibliography{reference}

\clearpage
\appendix

\section{Theoretical Analysis for Section~\ref{sec:theo_discuss}}
\label{sec:appen_theo_ana}
Before proceeding with the proofs of the theorems presented in Section~\ref{sec:theo_discuss}, we first introduce a useful lemma concerning the theoretical effectiveness of kernel density estimation:
\begin{lemma}
Let $I(\mathbf{x},\mathbf{y})$ be the real mutual information of the random variables $\textbf{x}$, $\textbf{y}$, and $\hat{I}(\mathbf{x},\mathbf{y})$ be the corresponding kernel density estimation as defined in equations~\ref{eq: kernel_density_estimation}, then $\hat{I}(\mathbf{x},\mathbf{y})$ converges in probability to $I(\mathbf{x},\mathbf{y})$, i.e.:
\begin{equation}
    \begin{aligned}
        \hat{I}(\mathbf{x},\mathbf{y})\xrightarrow{a.s.} I(\mathbf{x},\mathbf{y}).
    \end{aligned}
\end{equation}
\label{lemma_1}
\end{lemma}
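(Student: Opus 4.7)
The plan is to establish the almost-sure convergence of $\hat{I}(\mathbf{x},\mathbf{y})$ to $I(\mathbf{x},\mathbf{y})$ by first handling the pointwise convergence of the underlying KDE-based densities, and then lifting this convergence to the mutual-information functional via a dominated-convergence argument combined with the continuous mapping theorem.

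First, I would write out both quantities in their standard integral form,
\begin{equation*}
I(\mathbf{x},\mathbf{y})=\int p(\mathbf{x},\mathbf{y})\log\frac{p(\mathbf{x},\mathbf{y})}{p(\mathbf{x})p(\mathbf{y})}\,d\mathbf{x}\,d\mathbf{y},
\end{equation*}
and its plug-in analogue obtained by replacing each density with the Gaussian-kernel estimator $\hat{p}$ from Equation~\ref{eq: kernel_density_estimation}. Next, I would invoke the classical Parzen--Rosenblatt consistency theorem for multivariate Gaussian KDE: provided the bandwidth sequence $m=m_N$ satisfies the standard regularity conditions $m_N\to 0$ and $N m_N^d\to\infty$ as the sample size $N\to\infty$, the estimators $\hat{p}(\mathbf{x},\mathbf{y})$, $\hat{p}(\mathbf{x})$, and $\hat{p}(\mathbf{y})$ converge almost surely and uniformly on compact sets to their population counterparts. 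This is a standard result that I would cite rather than re-derive.

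The second step is to lift pointwise/uniform density convergence to convergence of the MI functional. I would apply the continuous mapping theorem to the map $(a,b,c)\mapsto a\log(a/(bc))$, which is continuous on the open set $\{a>0,b>0,c>0\}$. Combined with the almost-sure convergence of the three KDE estimators, this yields almost-sure pointwise convergence of the integrand. To pass the limit under the integral sign, I would invoke the dominated convergence theorem, using an integrable envelope constructed from the tail decay of the Gaussian kernel together with standard moment assumptions on $\mathbf{x}$ and $\mathbf{y}$. Since the integration domain is the joint support of $(\mathbf{x},\mathbf{y})$, the resulting almost-sure convergence of the integrals gives $\hat{I}(\mathbf{x},\mathbf{y})\xrightarrow{a.s.} I(\mathbf{x},\mathbf{y})$.

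The main obstacle will be controlling the log-ratio term in regions where the true density is small or where the ratio $\hat{p}(\mathbf{x},\mathbf{y})/(\hat{p}(\mathbf{x})\hat{p}(\mathbf{y}))$ is poorly estimated; the log is unbounded near zero, so neither the continuous mapping theorem nor dominated convergence applies naively. I would address this by either (i) restricting to a regularity class in which the joint density is bounded away from zero on its support and has sub-Gaussian tails, so that an integrable dominating function exists, or (ii) truncating the integration region at a sequence of shrinking neighborhoods of the zero level-set and showing that the truncation error vanishes in the limit using boundedness of $p\log p$. Option (i) produces a cleaner argument and is consistent with the standard assumptions under which KDE-based MI estimation is known to be consistent, so I would proceed with that route and explicitly note the required regularity conditions on $p(\mathbf{x},\mathbf{y})$.
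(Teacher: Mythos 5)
Your overall strategy --- first establish almost-sure convergence of the kernel density estimates, then lift this to the mutual-information functional --- is the same two-step architecture the paper uses, but you execute both steps differently and, in fact, more carefully. At the first step the paper checks that the Gaussian kernel integrates to one, has zero first moment and bounded second moment, and then appeals to the Glivenko--Cantelli theorem to obtain uniform almost-sure convergence of $\hat{\mathrm{Pr}}(\mathbf{x},\mathbf{y})$, $\hat{\mathrm{Pr}}(\mathbf{x})$ and $\hat{\mathrm{Pr}}(\mathbf{y})$; you instead invoke the Parzen--Rosenblatt consistency theorem under the bandwidth conditions $m_N\to 0$ and $Nm_N^d\to\infty$. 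Your citation is the more appropriate one: Glivenko--Cantelli concerns empirical distribution functions rather than kernel density estimates, and uniform consistency of a KDE genuinely requires the bandwidth asymptotics you state --- the kernel's moment properties alone do not deliver it. At the second step the paper simply asserts that convergence of the three densities implies convergence of the MI integrals; you correctly identify that this is the real analytic obstacle, since the map $(a,b,c)\mapsto a\log\bigl(a/(bc)\bigr)$ is unbounded where the densities vanish, and you supply the missing continuous-mapping plus dominated-convergence (or truncation) argument together with the regularity hypotheses on $p(\mathbf{x},\mathbf{y})$ needed to make it go through. In short, your proof follows the paper's route but repairs two genuine gaps in it; the cost is the explicit bandwidth and density assumptions, which the paper leaves implicit but cannot avoid either.
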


\begin{proof}
Recall from Section~\ref{sec:theo_discuss} that the kernel density function we employ is Gaussian kernel function formulated as: $\hat{\mathrm{Pr}}(\mathbf{x})=\frac{e^{-u^2/2}}{(2\pi)^{d/2}m^d det(\mathbf{X}^{1/2})}$. According to the propositions of Gaussian kernel function~\cite{silverman2018density, moon1995estimation}, we have:
\begin{equation}
    \begin{aligned}
        \int_{-\infty}^{+\infty}\frac{e^{-u^2/2}}{(2\pi)^{d/2}m^d det(\mathbf{X}^{1/2})}du&=1,\\
        \int_{-\infty}^{+\infty}u\frac{e^{-u^2/2}}{(2\pi)^{d/2}m^d det(\mathbf{X}^{1/2})}du&=0,\\
        \int_{-\infty}^{+\infty}u^2\cdot \frac{e^{-u^2/2}}{(2\pi)^{d/2}m^d det(\mathbf{X}^{1/2})}du&=1<+\infty,
    \end{aligned}
\end{equation}
which signify the following facts: (1) the integration of Gaussian kernel function is 1; (2) the bias of Gaussian kernel function is 0; and (3) The variance of Gaussian kernel function is bounded. Thus, according to the theorem of Glivenko–Cantelli~\cite{ash2000probability}, we have:
\begin{equation}
    \begin{aligned}
        \sup_{\mathbf{x}, \mathbf{y}}|\hat{\mathrm{Pr}}(\mathbf{x}, \mathbf{y})-\mathrm{Pr}(\mathbf{x}, \mathbf{y})|\xrightarrow{a.s.}0, \\ \sup_{\mathbf{x}}|\hat{\mathrm{Pr}}(\mathbf{x})-\mathrm{Pr}(\mathbf{x)}|\xrightarrow{a.s.}0, \\ \sup_{ \mathbf{y}}|\hat{\mathrm{Pr}}(\mathbf{y})-\mathrm{Pr}(\mathbf{y})|\xrightarrow{a.s.}0,
    \end{aligned}
\end{equation}
which illustrate that the distributions and the joint distribution of random variable $\textbf{x},\textbf{y}$ all converge in probability. Given that the continuous form of mutual information $I(\mathbf{x},\mathbf{y})=\int_\textbf{x}\int_\textbf{y}\mathrm{Pr}(\mathbf{x},\mathbf{y})\log_2\frac{\mathrm{Pr}(\mathbf{x},\mathbf{y})}{\mathrm{Pr}(\mathbf{x})\mathrm{Pr}(\mathbf{y})}d\textbf{x}d\textbf{y}$ and $\hat{I}(\mathbf{x},\mathbf{y})=\int_\textbf{x}\int_\textbf{y}\hat{\mathrm{Pr}}(\mathbf{x},\mathbf{y})\log_2\frac{\hat{\mathrm{Pr}}(\mathbf{x},\mathbf{y})}{\hat{\mathrm{Pr}}(\mathbf{x})\hat{\mathrm{Pr}}(\mathbf{y})}d\textbf{x}d\textbf{y}$, the formulations of these two mutual information also converge in probability. Thus, the lemma is proved.
\end{proof}

\subsection{Theoretical Proofs for Theorem~\ref{theo:1}}
For the sufficiency condition, we have the following fact that minimizing $\mathcal{L}_{\mathbf{MI\_Suff}}(\mathbf{h}, \mathbf{y})=-\hat{\mathrm{Pr}}(\mathbf{h},\mathbf{y})\log_2\frac{\hat{\mathrm{Pr}}(\mathbf{h}, \mathbf{y})}{\hat{\mathrm{Pr}}(\mathbf{h})\hat{\mathrm{Pr}}(\mathbf{y}))}$ is equivalent to maximizing the discrete form of mutual information $I(\textbf{h},\textbf{y})$, according to lemma~\ref{lemma_1}. Let $\textbf{h}^*$ be the optimal parameters that satisfy the sufficiency condition $\textbf{y}=f^*(\textbf{G}_e)+\sigma$, and $\textbf{h}'$ be the parameters that are learned by minimizing $\mathcal{L}_{\mathbf{MI\_Suff}}(\mathbf{h}, \mathbf{y})$. Suppose $\textbf{h}^*\neq \textbf{h}'$. Then, according to the definitions of $\textbf{h}^*, \textbf{h}'$ and the propositions of mutual information, we have:
\begin{equation}
    \begin{aligned}
        I(\textbf{h}^*,\textbf{y})&=I(\textbf{h}^*, f^*(\textbf{h}^*)+\sigma)\\
        &= I(\textbf{h}^*, f^*(\textbf{h}^*)),
    \end{aligned}
\end{equation}

\begin{equation}
    \begin{aligned}
        I(\textbf{h}',\textbf{y})&=I(\textbf{h}', f^*(\textbf{h}^*)+\sigma)\\
        &= I(\textbf{h}', f^*(\textbf{h}^*)).
    \end{aligned}
\end{equation}

Since $\textbf{h}^*\neq \textbf{h}'$, it's obvious that $I(\textbf{h}^*, f^*(\textbf{h}^*))\geq I(\textbf{h}', f^*(\textbf{h}^*))$. However, $\textbf{h}'$ is obtained by minimizing $\mathcal{L}_{\mathbf{MI\_Suff}}(\mathbf{h}, \mathbf{y})$, which suggests that $I(\textbf{h}^*, f^*(\textbf{h}^*))\le I(\textbf{h}', f^*(\textbf{h}^*))$. The contradiction exists due to the false prerequisite of $\textbf{h}^*\neq \textbf{h}'$. Thus, $\textbf{h}^*= \textbf{h}'$, and the sufficiency condition is satisfied.

According to the invariance condition, we have:
\begin{equation}
    \begin{aligned}
        \mathrm{Pr}(\textbf{y}|e)=\mathrm{Pr}(\textbf{y}) \leftrightarrow \frac{\mathrm{Pr}(\textbf{y}|e)}{\mathrm{Pr}(\textbf{y})}&=1\\
        \leftrightarrow \frac{\mathrm{Pr}(\textbf{y},e)}{\mathrm{Pr}(\textbf{y})\mathrm{Pr}(e)}&=1.
    \end{aligned}
    \label{eq:invariance_equivalence}
\end{equation}

Note that $I(\textbf{y},e)=\sum_{\textbf{y},e}\mathrm{Pr}(\textbf{y},e)\log_2\frac{\mathrm{Pr}(\textbf{y},e)}{\mathrm{Pr}(\textbf{y})\mathrm{Pr}(e)}$. Thus, given the fact that $I(\textbf{y},e)\geq 0$, equation~\ref{eq:invariance_equivalence} is equivalent to minimizing $I(\textbf{y},e)$. According to the chain low of mutual information, $I(\textbf{y},e)\le I(\textbf{y},e|\textbf{h})$. As a consequence, minimizing $I(\textbf{y},e|\textbf{h})$ is equivalent to minimizing $I(\textbf{y},e)$. Also, given the definitions of mutual information and KL divergence, we have:
\begin{equation}
    \begin{aligned}
        I(\textbf{y},e|\textbf{h})&=\mathrm{KL}(\mathrm{Pr}(\textbf{y}|\textbf{h},e)||\mathrm{Pr}(\textbf{y}|\textbf{h}))\\
        &=\mathrm{KL}(\mathrm{Pr}(\textbf{y}|\textbf{h},e)||\mathbb{E}_e(\textbf{y}|\textbf{h},e))\\
        &\le \mathrm{KL}(\mathrm{Pr}(\textbf{y}|\textbf{h})||\mathbb{E}_e(\textbf{y}|\textbf{h}))\\
        &\le \mathbb{V}_e(\textbf{y}|\textbf{h}),
    \end{aligned}
    \label{eq:kl_invar}
\end{equation}
where the first inequality is achieved due to the inaccessibility of the spurious environment $e$, and the second inequality is achieved due to the convexity of KL divergence and the Jenson Inequality. As the sufficiency condition illustrated above, the predicted label $\textbf{y}$ is derived from the embedding $\textbf{h}$ optimized by minimizing $\mathcal{L}_{\mathbf{MI\_Suff}}$. Thus, minimizing the risk extrapolation term $\mathbb{V}(\left \{ \mathcal{L}_{\mathrm{MI\_Suff}}(\textbf{h}_i^r,\textbf{y}_i) \right \}_{r=1}^k)$ in the equation~\ref{eq: kde_mi_loss} is capable of minimizing $\mathbb{V}_e(\textbf{y}|\textbf{h})$, thus realizing the invariance condition.

\subsection{Theoretical Proofs for Theorem~\ref{theo:2}}
As illustrated as the equation~\ref{eq: OOD_formulation}, the OOD formulation requires models to enhance the worst-case performance among multiple graphs within all possible latent environments. Thus, to prove theorem~\ref{theo:2}, we need to prove the value of the KDE-based MI loss $\mathcal{L}(f^*(\textbf{G}_e),\textbf{y})$ calculated under the optimal parameters $f^*$ is less than $\mathcal{L}(f(\textbf{G}_e),\textbf{y})$ under any sub-optimal parameters $f$, formulated as: 
\begin{equation}
    \begin{aligned}
        \max_{\textbf{G}_e\sim \mathcal{G}}\mathcal{L}(f(\textbf{G}_e),\textbf{y}) \geq \max_{\textbf{G}_e\sim \mathcal{G}} \mathcal{L}(f^*(\textbf{G}_e),\textbf{y}).
    \end{aligned}
\end{equation}
The formulation above is equivalent to proving the statement that given any kinds of paramenters $f$, there exists a graph under certain environment $\textbf{G}'_e\sim \mathcal{G}$ and the corresponding loss value of $\mathcal{L}(f(\textbf{G}'_e),\textbf{y})$, such that the loss value of $\mathcal{L}(f^*(\textbf{G}_e),\textbf{y})$ under any graph $\textbf{G}_e\sim \mathcal{G}$ is less than that of $\mathcal{L}(f(\textbf{G}'_e),\textbf{y})$, formulated as below:
\begin{equation}
    \begin{aligned}
        \mathcal{L}(f(\textbf{G}'_e),\textbf{y}) \geq  \mathcal{L}(f^*(\textbf{G}_e),\textbf{y}).
    \end{aligned}
\end{equation}

Note that the optimized parameter $f^*$ has satisfied the sufficiency condition $\textbf{y}=f^*(\textbf{G}_e)+\sigma$, which suggests that the loss of $f^*$ is minimized compared to other kinds of parameters $f$ under the graph $\textbf{G}'_e$. We can derive that $\mathcal{L}(f(\textbf{G}'_e),\textbf{y}) \geq  \mathcal{L}(f^*(\textbf{G}'_e),\textbf{y})$. Moreover, the optimized parameter $f^*$ has also satisfied the invariance condition $\mathrm{Pr}(\textbf{y}|e)=\mathrm{Pr}(\textbf{y})$, which suggests that the loss of $f^*$ is minimized under any graph with arbitrary environment $\textbf{G}_e$. As a result, We can obtain that $\mathcal{L}(f^*(\textbf{G}'_e),\textbf{y}) \geq  \mathcal{L}(f^*(\textbf{G}_e),\textbf{y})$. Based on the illustration above, the theorem~\ref{theo:2} is thus proved.

\subsection{Theoretical Proofs for Theorem~\ref{theo:3}}
In this section, we provide a detailed theoretical analysis to establish the computational efficiency of the proposed model, as formalized in Theorem~\ref{theo:3}. Let $n$ denote the number of nodes, $k$ the number of random walks initiated per node, $s$ the length of each walk, and $d$ the dimensionality of node features. Furthermore, let $l_1$ and $l_2$ represent the number of layers in the GNN-based LRW sampler and the MLP-based path encoder, respectively.
The LRW sampler utilizes a GNN with $l_1$ layers to compute LRW embeddings, incurring a time complexity of $\mathcal{O}(nd^2l_1)$. Similarly, the path encoder, implemented as an $l_2$-layer MLP, introduces an additional time complexity of $\mathcal{O}(nd^2l_2)$. Regarding the learnable random walk process, each node initiates $k$ random walks of length $s$, and at each step computes cosine similarities between its own LRW embedding and those of its neighbors. This operation results in a time complexity of $\mathcal{O}(nksd)$.
Consequently, the total time complexity of the proposed LRW-OOD model can be expressed as $\mathcal{O}\left(nd^2(l_1+l_2)+nksd\right)$.
The space complexity of LRW-OOD is composed of LRW embeddings and the path embeddings generated in the LRW sampler and the path encoder. 
Thus, the overall space complexity of the proposed model is $\mathcal{O}(nd(l_1+l_2))$. Thus, Theorem~\ref{theo:3} is proved.

\section{Datasets and Pre-processing}
\label{dataset_appendix}
In this section, we describe the experimental datasets used in this paper, along with the corresponding data pre-processing procedures and dataset splitting strategies. The datasets are categorized into three distinct groups based on the type of distribution shift they represent: synthetic datasets, cross-domain datasets, and a temporal evolution dataset. These correspond to artificial shifts, cross-domain shifts, and temporal shifts, respectively, as summarized in Table~\ref{tab:dataset_information}. The subsequent subsections provide a detailed account of the pre-processing steps and data partitioning strategies applied to each category.

\begin{table}[h!t]
\setlength{\heavyrulewidth}{1.5pt}
\captionsetup{font={small, stretch=1}}
\caption{The detailed information of the original datasets.}
\begin{tabular}{@{}c|c|ccccc@{}}
\toprule
Distribution shift                  & Dataset    & \#Nodes & \#Edges   & \#Features & \#Classes & Train/Val/Test \\ \midrule
\multirow{4}{*}{Artificial Shift}   & Cora       & 2,708   & 5,429     & 1,433      & 7         & Domain-level         \\
                                    & CiteSeer   & 3,327   & 4,732     & 3,703      & 6         & Domain-level         \\
                                    & PubMed     & 19,717  & 44,338    & 500        & 3         & Domain-level         \\
                                    & LastFMAsia & 7,624   & 55,612    & 128        & 18        & Domain-level         \\ \midrule
\multirow{2}{*}{Cross-domain Shift} & Twitch     & 34,120  & 892,346   & 2,545      & 2         & Domain-level         \\
                                    & WebKB      & 617     & 1,138     & 1,703      & 5         & Domain-level         \\ \midrule
Temporal Shift            & ogb-ArXiv  & 169,343 & 1,116,243 & 128        & 40        & Time-level           \\ \bottomrule
\end{tabular}
\label{tab:dataset_information}
\end{table}

\subsection{Synthetic Datasets Pre-processing}
Cora, CiteSeer, PubMed, and LastFMAsia are four widely utilized benchmark datasets for node classification tasks, frequently employed to evaluate the performance and design of GNNs. The Cora, CiteSeer, and PubMed datasets represent citation networks, where nodes correspond to academic papers and edges denote citation relationships between them. LastFMAsia is a social network dataset in which nodes represent users of the LastFM platform, and edges indicate friendship relations among users.

For each dataset, we first duplicate the original graph by $n_{env}$ times (the number of latent environments), each one prepared for being augmented by the corresponding spurious environment-sensitive noise. Then, we use the adjacency matrix and the label to construct the spurious noise. Specifically, assume the adjacency matrix as $\textbf{A}$, the original node features as $\textbf{X}_{ori}$ and the node label as $\textbf{y}$. Then we adopt a randomly initialized GNN (with the adjacency matrix $\textbf{A}$ and the node label $\textbf{y}$) to generate the invariant node features, denoted as $\textbf{X}_{inv}$. Then, we employ another randomly initialized MLP (with input of a Gaussian noise whose mean value is the corresponding environment id within $n_{env}$) to generate spurious node features $\textbf{X}_{spu}$. By integrating the invariant and spurious node features together, we obtain the node features with the artificial distribution shift $\textbf{X}_{art}=\textbf{X}_{inv}+\textbf{X}_{spu}$. After that, we concatenate the original node features and the features with artificial distribution shift $\textbf{X}= [\textbf{X}_{ori}, \textbf{X}_{art}]$ as input node features for training and evaluation. In this way, we construct $n_{env}=5$ graphs with different environment id’s for each dataset. For all baselines, we use three environments for training, one for validation and report the classification accuracy on the remaining graph.

\subsection{Cross-domain Datasets Pre-processing}
A common scenario in which distribution shifts arise in graph-structured data is cross-domain transfer. In many real-world applications, multiple observed graphs may be available, each originating from a distinct domain. For instance, in the context of social networks, domains can be defined based on the geographic or demographic context in which the networks are collected. More generally, graph data typically captures relational structures among a specific set of entities, and the nature of interactions or relationships often varies significantly across different groups. As a result, the underlying data-generating distributions differ between domains, giving rise to domain shifts.

The Twitch and WebKB datasets exemplify cross-domain distribution shifts as described above. The Twitch dataset comprises six distinct networks, each representing a different geographical region—specifically, DE, PT, RU, ES, FR, and EN. In these networks, nodes correspond to Twitch users of game streaming, and edges represent friendship relations among them. While these networks share invariant characteristics—such as the majority of users being game streamers—they also exhibit region-specific spurious correlations (e.g., users from a particular region may demonstrate preferences for certain games). This combination of shared and domain-specific features makes Twitch a suitable dataset for evaluating OOD generalization under cross-domain shifts. For all baseline models, we use the networks from DE and PT as the training domains, those from RU and ES for validation, and the remaining networks (FR and EN) for testing.

Another dataset is WebKB which consists of three networks (i.e., Wisconsin, Cornell and Texas) of web pages collected from computer science departments of different universities. In each network, nodes represent web pages and edges represent the hyperlink between them. On the one hand, these networks share some invariant knowledge since they are all collected from the computer science departments; on the other hand, they also contain spurious correlations due to the fact that these departments from different universities may have different research focuses. As a result, these properties make WebKB an ideal OOD dataset with various cross-domain distribution shifts. For all baselines, we employ the network from Wisconsin for the training set, that from Cornell for the validation set and the remaining for the testing set.

\subsection{Temporal Evolution Datasets Pre-processing}
Another prevalent scenario for OOD generalization arises in the context of temporal graphs, which evolve dynamically over time through the addition or deletion of nodes and edges. As illustrated in Figure~\ref{fig: existing_pipelines}(a) of Section~\ref{sec: Introduction}, such graphs are common in large-scale real-world applications, including social and citation networks, where the distributions of node features, topological structures, and labels often exhibit strong temporal dependencies at varying time scales. To investigate temporal distribution shifts in node classification, we employ the widely-used ogb-ArXiv dataset, which provides a benchmark setting for evaluating model performance under temporal dynamics. 
The ogb-ArXiv dataset consists of 169,343 nodes, each representing a computer science paper from the arXiv repository, with 128-dimensional feature vectors. It contains 1,116,243 edges that capture citation relationships between papers, and 40 distinct node labels corresponding to the subject areas of the papers. This dataset serves as a representative benchmark for OOD generalization under temporal distribution shifts, as citation behaviors naturally evolve over time. For all baseline models, papers published prior to 2017 are used for training, those published in 2018 for validation, and the remaining papers for testing.

\section{Additional Experimental Results}
\label{appen_add_exp}
In this section, we provide the additional experimental results illustrated in Section~\ref{sec: Experiments}. Figure~\ref{fig: exp_generalization_2} is the experiment results of the performance comparison from Section~\ref{sec:performance_comparison}. Table~\ref{tab:ablation_performance} is the experiment results of the ablation study from Section~\ref{sec:ablation_study}. Table~\ref{tab:walk_len} and Table~\ref{tab:walk_time} are the experiment results of the sensitivity study from Section~\ref{sec:sensitivity_analysis}. Figure~\ref{fig: visualization} is the experiment results of the model visualization from Section~\ref{sec:model_visualization}.

\begin{figure*}[t]
  \includegraphics[width=\textwidth]{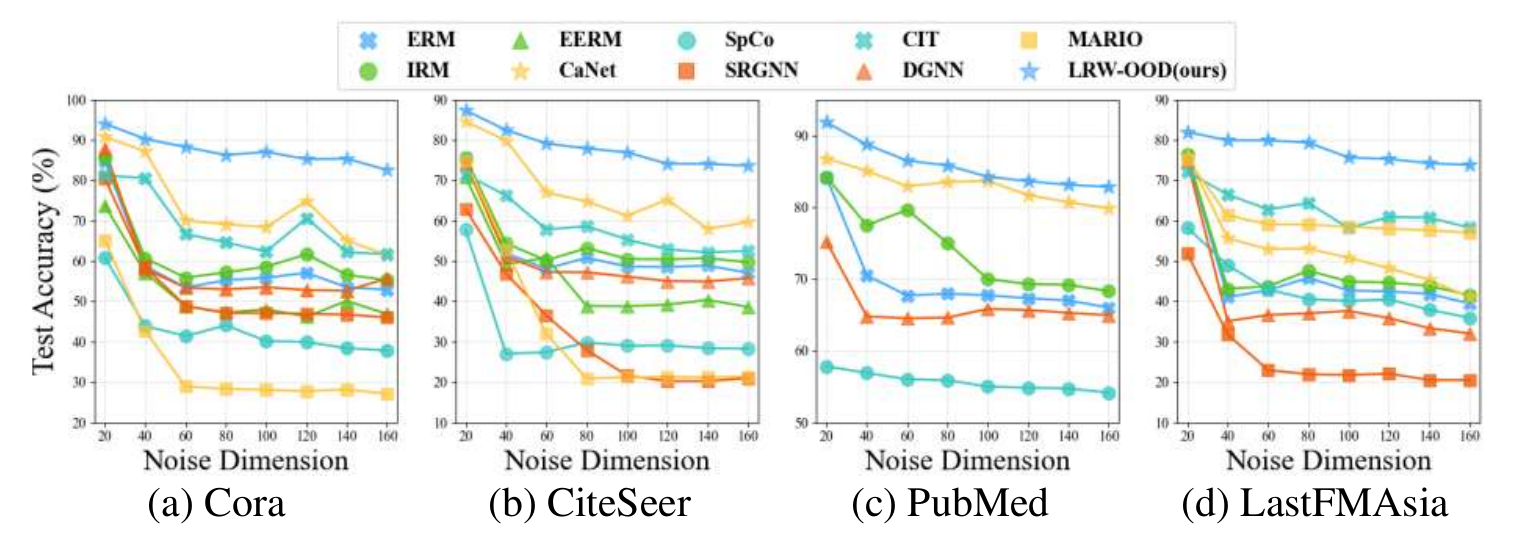}
  \centering
  \captionsetup{font={small,stretch=1}}
  \caption{The performance comparison of graph OOD models using GAT as the backbone.}
  \label{fig: exp_generalization_2}
\end{figure*}

\begin{table}[!t]
\centering
\setlength{\heavyrulewidth}{1.5pt} 
\captionsetup{font={small, stretch=1}}
\caption{The ablation performance for LRW-OOD.}
\begin{tabular}{@{}c|c|cccc@{}}
\toprule
\textbf{Backbone} & \textbf{Model} & \textbf{Cora} & \textbf{CiteSeer} & \textbf{PubMed} & \textbf{LastFMAsia} \\ 
\midrule
\multirow{4}{*}{GCN} & LRW-OOD & \textbf{82.63±0.64} & \textbf{73.79±1.69} & \textbf{82.71±0.13} & \textbf{73.62±0.45} \\
 & w/o SM & 81.47±1.01 & 71.94±0.41 & 82.13±0.10 & 71.71±0.45 \\ 
 & w/o REM & 81.80±0.57 & 70.18±0.72 & 82.16±0.04 & 73.47±0.34 \\
 & w/o LRW & 73.4±0.39 & 59.08±0.27 & 78.03±0.35 & 67.4±0.26 \\
\midrule
\multirow{4}{*}{GAT} & LRW-OOD & \textbf{82.61±0.03} & \textbf{73.65±0.44} & \textbf{82.84±0.24} & \textbf{73.83±0.78} \\
 & w/o SM & 81.38±0.84 & 71.72±0.17 & 81.86±0.10 & 72.48±0.87 \\
 & w/o REM & 81.63±0.58 & 70.16±0.79 & 82.07±0.03 & 73.51±0.6 \\
 & w/o LRW & 75.1±0.33 & 59.25±1.03 & 78.25±0.99 & 71.79±1.34 \\
 \bottomrule
\end{tabular}
\label{tab:ablation_performance}
\end{table}

\begin{table}[!t]
\setlength{\heavyrulewidth}{1.5pt}
\captionsetup{font={small, stretch=1}}
\caption{The performance of LRW-OOD under different hyper-paramenters of random walk steps.}
\begin{tabular}{@{}c|c|ccccc@{}}
\toprule
Backbone & Dataset & 1-step & 2-steps & 3-steps & 4-steps & 5-steps \\ 
\midrule
\multirow{4}{*}{GCN} 
& Cora & \textbf{84.4{\small ±0.11}} & 83.24{\small ±0.14} & 82.66{\small ±0.37} & 82.63 {\small ±0.64} & 82.53{\small ±0.20} \\
& CiteSeer & \textbf{74.78{\small ±0.64}} & 73.58{\small ±0.43} & 73.25{\small ±0.37} & 73.79{\small ±1.69} & 72.72{\small ±0.09} \\
& PubMed & \textbf{83.14{\small ±1.32}} & 82.76{\small ±0.78} & 82.8{\small ±0.43} & 82.71{\small ±0.13} & 82.53{\small ±0.54} \\
& LastFMAsia & 73.20{\small ±0.30} & 73.32{\small ±0.16} & 73.35{\small ±0.23} & 73.47{\small ±0.34} & \textbf{74.36{\small ±0.32}} \\ 
\midrule
\multirow{4}{*}{GAT} 
& Cora & \textbf{85.98{\small ±0.19}} & 85.01{\small ±0.31} & 84.43{\small ±0.43} & 82.61{\small ±0.03} & 83.75{\small ±0.50} \\
& CiteSeer & \textbf{77.80{\small ±0.20}} & 77.06{\small ±0.38} & 76.33{\small ±0.54} & 73.65{\small ±0.44} & 72.98{\small ±0.39} \\ 
& PubMed & \textbf{83.76{\small ±1.48}} & 83.68{\small ±0.62} & 83.12{\small ±0.14} & 82.84{\small ±0.24} & 82.45{\small ±0.35} \\ 
& LastFMAsia & 73.44{\small ±0.37} & 73.56{\small ±0.06} & \textbf{74.18{\small ±0.14}} & 73.83{\small ±0.78} & 72.92{\small ±0.15} \\ 
\bottomrule
\end{tabular}
\label{tab:walk_len}
\end{table}

\begin{table}[!t]
\setlength{\heavyrulewidth}{1.5pt}
\captionsetup{font={small, stretch=1}}
\caption{The performance of LRW-OOD under different hyper-paramenters of random walk times.}
\begin{tabular}{@{}c|c|ccccc@{}}
\toprule
Backbone & Dataset & 1-time & 2-times & 3-times & 4-times & 5-times \\ 
\midrule
\multirow{4}{*}{GCN} 
& Cora & 81.52{\small ±0.24} & 82.92{\small ±0.13} & 82.45{\small ±0.23} & 82.63{\small ±0.64} & \textbf{83.01{\small ±0.23}} \\
& CiteSeer & 71.54{\small ±1.23} & 72.98{\small ±0.29} & 73.25{\small ±0.37} & \textbf{73.79{\small ±1.69}}
& 73.08{\small ±0.24} \\
& PubMed & 81.94{\small ±0.41} & 81.88{\small ±1.01} & 82.03{\small ±0.14} & 82.71{\small ±0.13} & \textbf{83.01{\small ±0.48}} \\
& LastFMAsia & 71.90{\small ±0.54} & 72.59{\small ±0.2} & 72.43{\small ±0.20} & 73.47{\small ±0.34} & \textbf{73.96{\small ±0.13}} \\ 
\midrule
\multirow{4}{*}{GAT} 
& Cora & 81.90{\small ±0.43} & 82.93{\small ±0.52} & 81.49{\small ±0.28} & 82.61{\small ±0.03} & \textbf{83.62{\small ±0.43}} \\
& CiteSeer & 72.94{\small ±0.67} & 73.12{\small ±0.35} & 73.33{\small ±0.54} & \textbf{73.65{\small ±0.44}} & 73.22{\small ±0.42} \\ 
& PubMed & 81.96{\small ±0.94} & 82.08{\small ±1.14} & 82.53{\small ±0.89} & 82.84{\small ±0.24} & \textbf{83.01{\small ±0.84}} \\ 
& LastFMAsia & 72.81{\small ±0.45} & 73.00{\small ±0.37} & 73.07{\small ±0.11} & 73.83{\small ±0.78} & \textbf{73.98{\small ±0.23}} \\ 
\bottomrule
\end{tabular}
\label{tab:walk_time}
\end{table}

\begin{figure*}[t]
  \includegraphics[width=\textwidth]{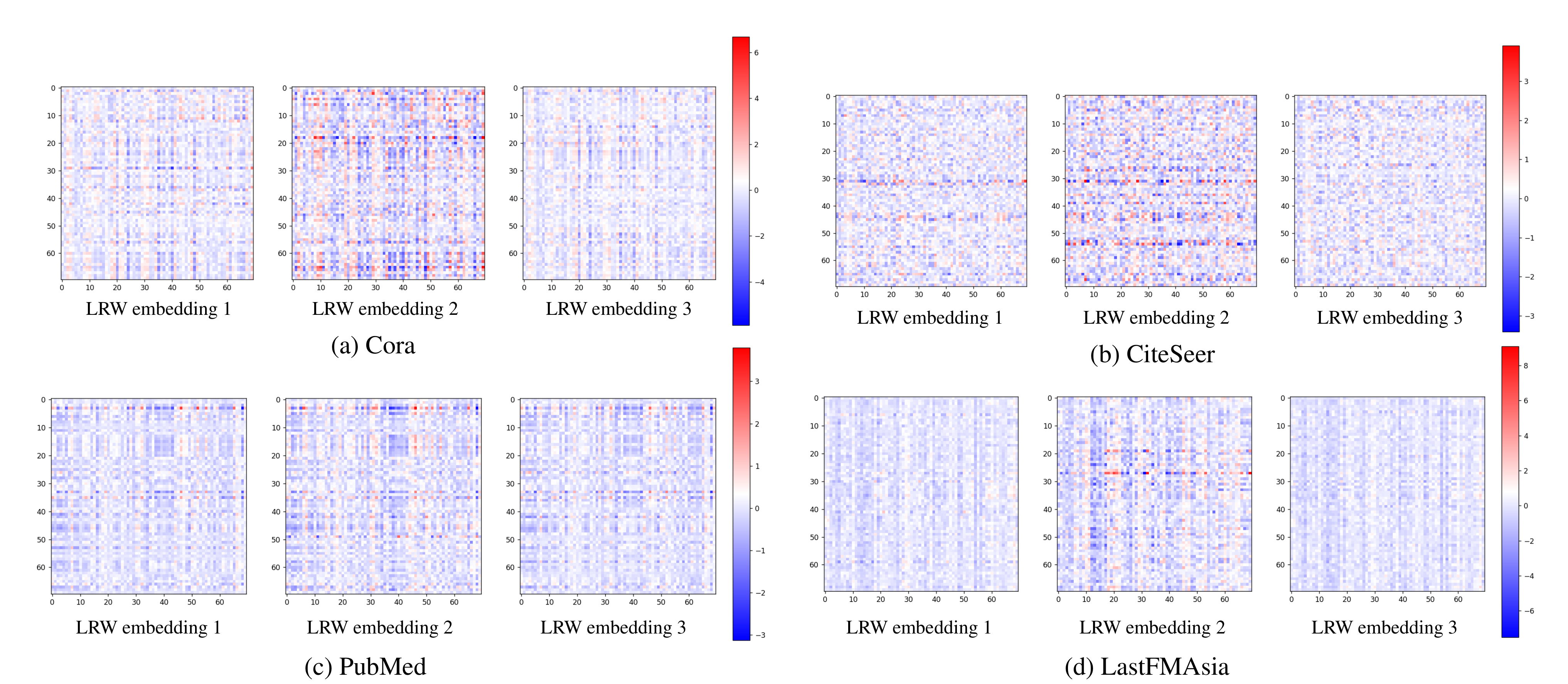}
  \centering
  \captionsetup{font={small,stretch=1}}
  \caption{The visualization of weights of LRW-OOD on the synthetic datasets.}
  \label{fig: visualization}
\end{figure*}

\end{document}